\documentclass{article}

\usepackage{microtype}
\usepackage{graphicx}
\usepackage{subcaption}
\usepackage{booktabs} % for professional tables

\usepackage{hyperref}

\usepackage[preprint]{icml2026}

\usepackage{amsmath}
\usepackage{amssymb}
\usepackage{mathtools}
\usepackage{amsthm}

\usepackage{booktabs}
\usepackage{array}
\usepackage{makecell}
\usepackage{xcolor}
\usepackage{colortbl}

\definecolor{tokenred}{RGB}{180, 20, 20}
\definecolor{headblue}{RGB}{20, 50, 160}
\definecolor{slablue}{RGB}{235, 240, 255}
\newcommand{\cblue}[1]{\textcolor{headblue}{\textbf{#1}}}
\newcommand{\cred}[1]{\textcolor{tokenred}{\textbf{#1}}}
\definecolor{highlight}{RGB}{0, 102, 204}

\usepackage{tikz}
\usetikzlibrary{positioning, shapes.geometric, arrows.meta, calc, backgrounds, fit, shadows.blur}

\usepackage[capitalize,noabbrev]{cleveref}

\theoremstyle{plain}
\newtheorem{theorem}{Theorem}[section]
\newtheorem{proposition}[theorem]{Proposition}

\theoremstyle{definition}

\theoremstyle{remark}

\usepackage[textsize=tiny]{todonotes}

\icmltitlerunning{Softmax Linear Attention: Reclaiming Global Competition}

\begin{document}

\twocolumn[
  \icmltitle{Softmax Linear Attention: Reclaiming Global Competition}

  % It is OKAY to include author information, even for blind submissions: the
  % style file will automatically remove it for you unless you've provided
  % the [accepted] option to the icml2026 package.

  % List of affiliations: The first argument should be a (short) identifier you
  % will use later to specify author affiliations Academic affiliations
  % should list Department, University, City, Region, Country Industry
  % affiliations should list Company, City, Region, Country

  % You can specify symbols, otherwise they are numbered in order. Ideally, you
  % should not use this facility. Affiliations will be numbered in order of
  % appearance and this is the preferred way.
    \icmlsetsymbol{equal}{*}
    
    \begin{icmlauthorlist}
      \icmlauthor{Mingwei Xu}{equal,yyy}
      \icmlauthor{Xuan Lin}{equal,comp}
      \icmlauthor{Xinnan Guo}{comp}
      \icmlauthor{Wanqing Xu}{comp}
      \icmlauthor{Wanyun Cui}{yyy}
    \end{icmlauthorlist}
    
    \icmlaffiliation{yyy}{MoE Key Laboratory of Interdisciplinary Research of Computation and Economics,Shanghai University of Finance and Economics}
    \icmlaffiliation{comp}{Ant Group}
    
    % 通讯作者（写谁就标谁；可写多个）
    \icmlcorrespondingauthor{Wanyun Cui}{cui.wanyun@sufe.edu.cn}
    
    % 打印脚注：共同一作说明

  % You may provide any keywords that you find helpful for describing your
  % paper; these are used to populate the "keywords" metad\ata in the PDF but
  % will not be shown in the document
  \icmlkeywords{Machine Learning, ICML}

  \vskip 0.3in
]

% this must go after the closing bracket ] following \twocolumn[ ...

% This command actually creates the footnote in the first column listing the
% affiliations and the copyright notice. The command takes one argument, which
% is text to display at the start of the footnote. The \icmlEqualContribution
% command is standard text for equal contribution. Remove it (just {}) if you
% do not need this facility.

% Use ONE of the following lines. DO NOT remove the command.
% If you have no special notice, KEEP empty braces:
% \printAffiliationsAndNotice{}  % no special notice (required even if empty)
% Or, if applicable, use the standard equal contribution text:
\printAffiliationsAndNotice{\icmlEqualContribution}

\begin{abstract}
While linear attention reduces the quadratic complexity of standard Transformers to linear time, it often lags behind in expressivity due to the removal of softmax normalization. This omission eliminates \emph{global competition}, a critical mechanism that enables models to sharply focus on relevant information amidst long-context noise. In this work, we propose \textbf{Softmax Linear Attention (SLA)}, a framework designed to restore this competitive selection without sacrificing efficiency. By lifting the softmax operation from the token level to the head level, SLA leverages attention heads as coarse semantic slots, applying a competitive gating mechanism to dynamically select the most relevant subspaces. This reintroduces the ``winner-take-all'' dynamics essential for precise retrieval and robust long-context understanding. 
Distinct from prior methods that focus on refining local kernel functions, SLA adopts a broader perspective by exploiting the higher-level multi-head aggregation structure. Extensive experiments demonstrate that SLA consistently enhances state-of-the-art linear baselines (RetNet, GLA, GDN) across language modeling and long-context benchmarks, particularly in challenging retrieval scenarios where it significantly boosts robustness against noise, validating its capability to restore precise focus while maintaining linear complexity.
\end{abstract}

% !TeX root = main.tex
\section{Introduction}
\label{sec:intro}

% --- Local macros (kept here to avoid touching the template preamble) ---
% \definecolor{AccentRed}{RGB}{170,0,0}
% \definecolor{AccentBlue}{RGB}{0,70,140}
% \newcommand{\cblue}[1]{\textcolor{AccentBlue}{\textbf{#1}}}
% \newcommand{\cred}[1]{\textcolor{AccentRed}{\textbf{#1}}}
\newcommand{\softmax}{\mathrm{softmax}}
\newcommand{\Attn}{\mathrm{Attn}}
\newcommand{\LA}{\mathrm{LA}}
\newcommand{\softmaxhead}{\softmax_{\mathrm{head}}}

\iffalse
\begin{table}[t]
  \centering
  \small
  \setlength{\tabcolsep}{2pt}
  \renewcommand{\arraystretch}{1.3}
  \caption{A compact comparison of three multi-head attention forms. Full attention relies on an indecomposable \cred{token-wise softmax}; standard linear attention employs a \emph{kernel-based approximation} that sacrifices competition; we propose to leverage the \emph{multi-head architecture} to retain linearity while reintroducing \cblue{inter-head softmax competition} to approximate the selectivity of the original softmax.}
  \label{tab:attn-formula}
  \begin{tabular}{@{}p{0.98\columnwidth}@{}}
    \toprule
    \textbf{Multi-head softmax attention} \\
    {\centering
    \(
      O = \bigoplus_{h=1}^{H} \left(
      \underbrace{\softmax_{t}\left(Q_{h}K_{h}^\top\right)}_{\cred{\text{token-wise (undecomposed)}}}
      V_{h} \right) W^O
    \) \par} \\
    \midrule
    \textbf{Multi-head linear attention} \\
    {\centering
    \(
      O = \bigoplus_{h=1}^{H} \left(
      \underbrace{\phi(Q_{h}) \big(\phi(K_{h})^\top V_{h}\big)}_{\text{kernel-based approximation}}
      \right) W^O
    \) \par} \\
    \midrule
    \textbf{Multi-head \textbf{softmax} linear attention (ours)} \\
    {\centering
    \(
      O = \bigoplus_{h=1}^{H} \left(
      \underbrace{\softmax_{h}(Q'_h) \softmax_{h}(K'_h)}_{\cblue{\text{head-level competition}}}
      \cdot
      \left( \phi(Q_{h}) \phi(K_{h})^\top V_{h} \right)
      \right) W^O
    \) \par} \\
    \bottomrule
  \end{tabular}
\end{table}
\fi

\begin{table*}[t]
  \centering
  \caption{A compact comparison of three multi-head attention forms. Full attention relies on an indecomposable \cred{token-wise softmax}; standard linear attention employs a \emph{kernel-based approximation} that sacrifices competition; we propose to leverage the \emph{multi-head architecture} to retain linearity while reintroducing \cblue{inter-head softmax competition} to approximate the selectivity of the original softmax. $\bigoplus$ denotes vector concatenation.}
  \label{tab:comparison}
  \vspace{2mm}
  \begin{small}
  \begin{tabular}{@{}lllccl@{}}
  \toprule
  \textbf{Mechanism} & \textbf{Attention Formulation} & \textbf{Competition Level} & \textbf{Complexity} & \textbf{State Size} \\ \midrule
  \makecell[l]{Multi-Head \\ Attention} & $\bigoplus_{h=1}^H \underbrace{\text{softmax}_t \left( \frac{Q_h K_h^\top}{\sqrt{d}} \right)}_{\text{over tokens}} V_h$ & Token-wise (Global)  & $O(L^2)$ & $O(L)$ \\ \addlinespace[2mm]
  \makecell[l]{Linear \\ Attention} & $\bigoplus_{h=1}^H \phi(Q_h) \left( \phi(K_h)^\top V_h \right)$ & None (Point-wise) & $O(L)$ & $O(1)$ \\ \addlinespace[3mm]
  \makecell[l]{\textbf{Softmax Linear} \\ \textbf{Attention}} & \(
      \bigoplus_{h=1}^{H} 
      \underbrace{\softmax_{h}(Q'_h) \softmax_{h}(K'_h)}_{\cblue{\text{over heads}}}
      \cdot
      \left( \phi(Q_{h}) \phi(K_{h})^\top V_{h} \right)
    \) & \textbf{Head-wise (Global)} & $O(L)$ & $O(1)$ \\ \bottomrule
  \end{tabular}
  \end{small}
  \end{table*}

Self-attention stands as the backbone of modern Large Language Models (LLMs) but is burdened by quadratic computational complexity due to its global softmax normalization~\citep{vaswani2017attention,lin2022survey}. The efficacy of standard attention stems from its ability to route information through a \emph{globally competitive} distribution, specifically \(\softmax(QK^\top/\sqrt{d})V\), where $d$ is the per-head dimension. However, this softmax term couples every query-key pair, enforcing an \(O(L^2)\) computational cost and precluding the decoupling of history required for constant-memory inference. Consequently, scaling to ultra-long contexts becomes computationally prohibitive.

Linear attention approaches mitigate this bottleneck by decomposing the attention map, though at the cost of removing the softmax normalization. Methods such as those by \citet{katharopoulos2020transformers} and \citet{shen2021efficient} approximate the non-decomposable softmax via kernel feature maps, \(\softmax(QK^\top) \approx \phi(Q)\phi(K)^\top\). This kernel decomposition enables the exploitation of matrix multiplication associativity, reducing complexity to linear \(O(L)\). While efficient, this linearization fundamentally alters the attention mechanism~\citep{mongaras2025expressiveness}.

Eliminating the softmax function is not without consequence; it removes the mechanism of \emph{global competition} that is essential for precise information retrieval. In full attention, the softmax denominator compels all tokens to compete for probability mass, enabling the model to sharply focus on relevant tokens while suppressing noise. In contrast, standard linear attention reduces attention to independent point-wise similarity scores. This absence of competition leads to inherent deficiencies, most notably \emph{Magnitude Neglect}~\citep{fan2025rectifying}. In standard softmax attention, a larger query magnitude sharpens the probability distribution, allowing the model to express high confidence. Conversely, linear attention weights remain invariant to query scaling, rendering the model incapable of dynamically adjusting its focus. Other resulting issues include \emph{Loss of Polarity}~\citep{Meng2025polaformer} and \emph{Context Collapse}~\citep{zhang2026mhla}.

Current efforts to restore expressivity generally fall into two categories. One line of work refines the kernel approximation, proposing magnitude-aware updates~\citep{fan2025rectifying} or polarity-aware kernels~\citep{Meng2025polaformer}. Another mainstream direction introduces data-dependent gating mechanisms~\citep{sun2023retentive,yang2023gated,schlag2021linear} to modulate information flow. However, these approaches remain confined to kernel-based linear approximation. While they enhance feature representation, the linear constraint fundamentally prevents the reintroduction of token-level global competition, leaving the model without a mechanism to strictly normalize and select information.

We address this limitation by shifting focus from single-head decomposition to exploiting the multi-head architecture to introduce global competition at the head level. We argue that precise token-level competition is not always necessary; coarse-grained competition is often sufficient for effective retrieval. Crucially, the multi-head architecture naturally provides the structural basis for this coarser granularity. It is well-established that different attention heads specialize in distinct functional or semantic roles~\citep{voita2019analyzing,Clark2019whatbert,basile2025headpursuit}. By leveraging this inherent specialization, we can impose competition across heads. This allows the model to dynamically prioritize the most relevant semantic subspaces, recovering the necessary selectivity without breaking the linear complexity of the attention mechanism.

Guided by this insight, we introduce \textbf{Softmax Linear Attention (SLA)}, which formulates attention by re-defining the multi-head aggregation mechanism. Instead of forcing a complex kernel decomposition, we exploit the multi-head structure to stratify the mechanism into two levels: (1) \textbf{Intra-head linearity}, where standard kernel decomposition ensures efficiency within each subspace; and (2) \textbf{Inter-head competition}, where a global softmax gate dynamically weights these subspaces. We achieve this by simply superimposing a head-level softmax on top of standard linear attention, effectively replacing the linear concatenation with a competitive ``winner-take-all'' selection. This modification is elegantly minimal and computationally negligible, yet it fundamentally alters the dynamics: it empowers the model to dynamically prioritize the most relevant semantic subspaces, thereby recovering sharp selectivity without sacrificing linear efficiency.

Our contributions are summarized as follows:
\begin{itemize}
  \item We identify that the independent aggregation of multi-head outputs in standard linear attention fundamentally hinders precise information selection. We propose shifting the competition paradigm from \emph{token-wise} to \emph{head-wise}, utilizing the head dimension as a proxy for semantic subspaces to recover global selectivity without quadratic cost.
  \item We introduce SLA, a generalizable mechanism that superimposes inter-head softmax gates on top of standard linear backbones. This stratifies attention into intra-head linearity and inter-head competition, efficiently combining linear complexity with competitive dynamics.
  \item We provide theoretical analysis proving that SLA restores \emph{magnitude sensitivity} and enables asymptotic \emph{winner-take-all} dynamics. This formally resolves the ``Magnitude Neglect'' pathology inherent in standard linear attention, allowing the model to dynamically sharpen its focus based on confidence.
\end{itemize}

\section{Related Work}
\label{sec:related}

\subsection{Efficient Transformers and Linear Attention}
A large body of work seeks to reduce the quadratic complexity of full softmax attention. Kernel-based approaches~\citep{katharopoulos2020transformers,choromanski2021rethinking,Wang2020linformer} approximate the attention matrix via feature maps \(\phi(\cdot)\), enabling \(O(L)\) complexity. Recently, this direction has converged with RNNs and State Space Models (SSMs). Models like RetNet~\citep{sun2023retentive}, RWKV~\citep{peng2023rwkv}, and Mamba~\citep{gu2024mamba} introduce decay or time-varying gates to linear recurrence, significantly improving performance. Further enhancements such as Gated Linear Attention (GLA)~\citep{yang2023gated} and DeltaNet~\citep{yang2024deltanet} incorporate data-dependent gating and delta-rule updates. Despite these innovations, they fundamentally remove the softmax normalization to achieve linearity.

\subsection{The Cost of Removing Softmax: Expressivity Bottlenecks}
Recent studies highlight systematic expressivity gaps when the softmax normalization is removed.
\textbf{Magnitude Neglect.} \citet{fan2025rectifying} show that standard linear attention is insensitive to query norm, preventing sharp focus. They propose magnitude-aware kernels to re-introduce norm dependency.
\textbf{Loss of Polarity.} \citet{Meng2025polaformer} note that non-negative kernels discard negative correlations and propose polarity-aware mechanisms to recover inhibitory signals.
\textbf{Context Collapse.} \citet{Dong2024bridging} and \citet{zhang2026mhla} demonstrate that linear attention maps are often non-injective, leading to context collapse where distinct queries map to identical outputs.
These works focus on repairing specific deficits (magnitude, polarity, diversity) at the kernel level but lack a mechanism to reinstate global competition.

\subsection{Multi-Head Mechanism and Feature Diversity}
Multi-head attention allows models to capture diverse relations in parallel subspaces~\citep{vaswani2017attention}, with heads often specializing in distinct linguistic functions~\citep{Clark2019whatbert,voita2019analyzing}. While standard attention uses softmax within each head, linear attention typically aggregates heads via simple concatenation, lacking inter-head interaction.
Prior works like Talking-Heads Attention~\citep{Shazeer2020talking} explore head mixing, but mostly in the context of \(O(L^2)\) attention. Our work introduces \textbf{inter-head competition} to linear attention. By treating heads as semantic slots and applying a head-wise softmax, we recover global, competition-driven selection while retaining linear complexity.
\section{Method}
\label{sec:method}

%In this section, we first analyze the structural limitations of standard linear attention—specifically, how the removal of the normalization term eliminates competitive selection. We then introduce Softmax Linear Attention (SLA), a simple yet effective mechanism that restores global competition at the head level without sacrificing linear complexity.

\subsection{The Competition Gap in Linear Attention}
\label{sec:method:gap}

Standard multi-head attention (MHA) relies on the softmax function to induce a \emph{competitive} distribution over the context. Formally, for a query \(q\) and a set of keys \(K\) and values \(V\), the output is:
\begin{equation}
    \label{eq:softmax_attn}
    O = \softmax\left(\frac{q K^\top}{\sqrt{d}}\right) V = \sum_{t=1}^{L} \underbrace{\frac{\exp(q \cdot k_t / \sqrt{d})}{\sum_{j=1}^{L} \exp(q \cdot k_j / \sqrt{d})}}_{\alpha_t} v_t.
\end{equation}
The denominator \(\sum_{j} \exp(\cdot)\) is crucial: it enforces a global constraint \(\sum \alpha_t = 1\), compelling all tokens to \emph{compete} for the limited probability mass. This allows the model to sharply focus on a few relevant tokens (high \(\alpha_t\)) while effectively suppressing irrelevant noise (near-zero \(\alpha_t\)), a property we term global competition. Without this competition, the model fails to achieve the ``winner-take-all'' dynamics required for near one-hot distributions, fundamentally limiting its ability to perform precise retrieval tasks where exact token matching is critical.

Linear attention linearizes this process by removing the softmax and utilizing a kernel feature map \(\phi(\cdot)\):
\begin{equation}
    \label{eq:linear_attn}
    O \approx \phi(q) \sum_{t=1}^{L} \phi(k_t)^\top v_t.
\end{equation}
While this enables \(O(L)\) complexity via the associativity of matrix multiplication, it fundamentally alters the information flow. The attention weight for token \(t\) is roughly \(\phi(q)^\top \phi(k_t)\), which is computed independently of other tokens. Absent the global normalization term, there is no mechanism to suppress a token based on the relevance of others. This lack of competition leads to a diffuse attention distribution (``magnitude neglect''), causing linear attention to struggle with precise retrieval tasks where distinguishing the ``best'' match from many ``good'' matches is critical.

\subsection{Softmax Linear Attention}
\label{sec:method:sla}

To restore global competition without reverting to quadratic complexity, we propose Softmax Linear Attention. Our core insight is to shift the competitive mechanism from the fine-grained token level to a coarser semantic cluster level. We leverage the existing multi-head architecture, where each head is known to capture distinct semantic features~\citep{voita2019analyzing,Clark2019whatbert}.

Conventionally, linear attention models treat these heads independently, simply concatenating their outputs via a linear projection. This implies a flat summation of features, discarding the opportunity for competition. However, we observe that introducing competition at this level is computationally affordable: unlike the quadratic token count \(L^2\), the number of heads \(H\) is small and constant. This allows us to reintroduce a softmax-based selection mechanism over heads with negligible cost.

Based on this, we introduce a competitive gating mechanism that operates over these head-slots. Specifically, we decouple the global softmax selection into two symmetric processes: \emph{read competition} (which heads should the query read from?) and \emph{write competition} (which heads should the key write to?). 

Intuitively, this dual gating reflects a symmetric selection process. The \emph{write competition} acts as a router for incoming information: it forces the key to decide which semantic subspaces it belongs to, ensuring that history is stored in the most relevant heads rather than being smeared across the entire memory state. The \emph{read competition} acts as a filter for retrieval: it allows the query to dynamically prioritize specific subspaces based on the current context, ignoring heads that contain irrelevant information. Together, they maintain a ``sharp'' memory access pattern that standard linear attention lacks.

Formally, this leads to a dual gating formulation:
\begin{equation}
    \label{eq:sla_final}
    O_{\text{SLA}} = \bigoplus_{h=1}^{H} \left( (\mathcal{G}^Q_h \odot \phi(Q_h)) (\mathcal{G}^K_h \odot \phi(K_h))^\top V_h \right) W^O.
\end{equation}
Here, \(\mathcal{G}^Q_h \in \mathbb{R}^{L \times 1}\) and \(\mathcal{G}^K_h \in \mathbb{R}^{L \times 1}\) are the head-level softmax gates for query and key, respectively, computed as:
\begin{equation}
    \mathcal{G}^Q_h = \softmax(Q W_{GQ})_h, \quad \mathcal{G}^K_h = \softmax(K W_{GK})_h,
\end{equation}
where \(W_{GQ}, W_{GK} \in \mathbb{R}^{d \times H}\) project the inputs into head-specific importance scores. Crucially, the \(\softmax\) normalization is performed across the head dimension \(H\).

This design is motivated by a low-rank approximation of the full attention matrix. The full softmax term \(\softmax(QK^\top)\) represents a joint probability distribution over tokens. By decoupling it into \(\mathcal{G}^Q(q) \cdot \mathcal{G}^K(k)\), we effectively approximate the joint distribution \(P(h|q, k)\) via independent marginals \(P(h|q) P(h|k)\) in the latent head space. This allows the model to select relevant semantic subspaces based on both the query's intent \(\mathcal{G}^Q_h\) (``what to look for?'') and the key's content \(\mathcal{G}^K_h\) (``what is stored?''), recovering a symmetric, competition-driven selection mechanism similar to the original dot-product attention but at a coarser granularity.

\iffalse
\paragraph{Connection and Difference with Efficient Attention.}
While our method shares the spirit of decomposing \(\softmax(QK^\top)\) with \emph{Efficient Attention} \citep{shen2021efficient}, there is a fundamental structural difference. 
Efficient Attention applies normalization functions \(\rho(\cdot)\) (typically feature-wise softmax) \emph{independently} within each attention head:
\begin{equation}
    \text{EfficientAttn}_h = \rho(Q_h) (\rho(K_h)^\top V_h).
\end{equation}
This independence fails to capture the relative importance variance across different heads, often leading to the loss of ``magnitude awareness.''
In contrast, our approach explicitly incorporates the head dimension \(H\) into the normalization process:
\begin{equation}
    \text{Ours}_h = \underbrace{\sigma_{\text{head}}(Q'_h) \sigma_{\text{head}}(K'_h)}_{\text{Head Modulation}} \cdot (\phi(Q_h) \phi(K_h)^\top V_h).
\end{equation}
By performing \textbf{softmax over heads}, we re-introduce an inter-head competition mechanism, allowing the model to dynamically up-weight informative heads and suppress noisy ones, a critical property of the original softmax attention that is typically absent in linear variants.
\fi

\paragraph{Recurrent Implementation.}
The core advantage of SLA lies in its efficient implementation. Since the gating \(\mathcal{G}^Q, \mathcal{G}^K\) depends only on the current token's input, the mechanism remains compatible with recurrent updates. See Appendix for detailed derivation.
Specifically, for each head \(h\), we maintain a recurrent state \(S_h \in \mathbb{R}^{d \times d}\). At each time step \(t\):
\begin{align}
    \label{eq:sla_update}
    S_{h, t} &= S_{h, t-1} + (\mathcal{G}^K_{h,t} \cdot \phi(k_{h,t}))^\top v_{h,t}, \\
    y_{h, t} &= (\mathcal{G}^Q_{h,t} \cdot \phi(q_{h,t})) S_{h, t}.
\end{align}
Here, \(\mathcal{G}^K_{h,t}\) modulates the strength of the key entering the memory (write competition), while \(\mathcal{G}^Q_{h,t}\) modulates the query strength reading from it (read competition).
The final output at step \(t\) is the concatenation of these per-head readouts:
\begin{equation}
    o_t = \text{Concat}_{h=1}^H \big( y_{h,t} \big) W^O.
\end{equation}

\paragraph{Chunkwise Parallel Training.}
To accelerate training on GPUs, we avoid the sequential bottleneck of the recurrent form by using a chunkwise parallel strategy. The sequence of length \(L\) is split into chunks of size \(C\). Within each chunk, we compute attention via standard matrix multiplication; cross-chunk dependencies are handled by passing the recurrent state \(S\) between chunks. Since the softmax gates \(\mathcal{G}\) are token-local, head-wise scalar modulators that only rescale the per-head read/write strengths, they introduce no additional token-to-token coupling and thus do not disrupt the chunkwise linearization of the \(K,V\) accumulation. As a result, SLA preserves the high training throughput typical of modern linear attention models, while adding competitive selection across heads.

\paragraph{Parameter Efficiency.}
The introduced gating mechanism is extremely lightweight. In terms of parameter count, SLA only adds two projection matrices \(W_{GQ}, W_{GK} \in \mathbb{R}^{d \times H}\) per layer. For the 340M-parameter model used in our experiments (24 layers, \(hidden\_size=1024\)), this results in approximately 0.05M additional parameters, which is negligible (\(<0.02\%\)) relative to the total model size.

\section{Theoretical Analysis}
\label{sec:analysis}

In this section, we provide a theoretical justification for SLA. We analyze its advantages over standard linear attention from two primary perspectives: (1) restoring magnitude sensitivity, and (2) enabling asymptotic winner-take-all selection.

\subsection{Restoring Magnitude Sensitivity via Head Competition}
\label{sec:analysis:magnitude}

A critical deficiency in standard linear attention is \emph{magnitude neglect}: the sharpness of the attention distribution is independent of the query norm. In standard softmax attention, the term \(\softmax(\lambda q \cdot K^\top)\) becomes sharper (approaching a one-hot distribution) as \(\lambda \to \infty\), allowing the model to express high confidence. Conversely, for linear attention formulated as \(\phi(q)\phi(K)^\top\), scaling \(q\) by \(\lambda\) merely scales the output vector by \(\lambda\) without altering the \emph{relative} distribution of attention weights.

\begin{proposition}
    \label{prop:magnitude}
    Let \(w_{\text{lin}}(k) = \frac{\phi(q)^\top \phi(k)}{\sum_j \phi(q)^\top \phi(k_j)}\) be the normalized attention weight for key \(k\) in linear attention. If \(\phi(\cdot)\) is homogeneous (e.g., ReLU), then \(w_{\text{lin}}(k)\) is invariant to scalar scaling of \(q\). Consequently, the entropy of the attention distribution \(\mathcal{H}(w_{\text{lin}})\) remains constant with respect to \(\|q\|\).
\end{proposition}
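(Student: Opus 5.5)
The plan is a direct computation that exploits positive homogeneity of \(\phi\). First we make the hypothesis precise. ``Homogeneous'' will mean positively homogeneous of degree \(p\geq 1\): \(\phi(\lambda x)=\lambda^{p}\phi(x)\) for all \(\lambda>0\). This covers ReLU with \(p=1\) and, for example, squared ReLU with \(p=2\). We consider only \(\lambda>0\), since negative scalings do not commute with ReLU. We also assume the denominator \(\sum_j \phi(q)^\top\phi(k_j)\) is nonzero, which is needed for \(w_{\text{lin}}\) to be defined at all. Positive homogeneity shows that if this holds for \(q\), it holds for \(\lambda q\).

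The core step is to substitute \(\lambda q\) for \(q\) and pull out the common factor:
\begin{equation*}
w_{\text{lin}}(k;\lambda q)=\frac{\phi(\lambda q)^\top\phi(k)}{\sum_j \phi(\lambda q)^\top\phi(k_j)}
=\frac{\lambda^{p}\,\phi(q)^\top\phi(k)}{\lambda^{p}\sum_j \phi(q)^\top\phi(k_j)}=w_{\text{lin}}(k;q).
\end{equation*}
This identity holds for every key \(k\) in the context, so the whole weight vector \(w_{\text{lin}}\) is unchanged under \(q\mapsto\lambda q\).

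For the entropy claim, \(\mathcal{H}(w_{\text{lin}})=-\sum_t w_{\text{lin}}(k_t)\log w_{\text{lin}}(k_t)\) is a function of the weight vector alone, so it is also invariant under \(q\mapsto \lambda q\). To phrase this as constancy in \(\|q\|\), write \(q=\|q\|\,\hat q\) with \(\hat q=q/\|q\|\). Then \(w_{\text{lin}}\), and hence \(\mathcal{H}(w_{\text{lin}})\), depends on \(q\) only through the direction \(\hat q\). Along any ray \(\{\lambda\hat q:\lambda>0\}\) it is therefore constant. This contrasts with softmax attention, where scaling \(q\) by \(\lambda\) changes the temperature of the distribution.

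No step is a real obstacle. The only care needed is in the hypotheses. For the weights to form a probability distribution with well-defined entropy, each term \(\phi(q)^\top\phi(k_j)\) must be nonnegative, which holds automatically for nonnegative feature maps such as ReLU. The denominator must be nonzero. Scaling must be restricted to \(\lambda>0\).
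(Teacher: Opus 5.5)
Your argument is correct and is essentially the paper's own. The paper gives no separate formal proof, only the remark before the proposition that scaling $q$ by $\lambda$ rescales every score $\phi(q)^\top\phi(k_j)$ by the same factor without changing the relative weights; that is exactly your cancellation of $\lambda^{p}$ between numerator and denominator, with your extra care (restricting to $\lambda>0$, requiring nonnegative scores and a nonzero denominator, and reading ``constant in $\|q\|$'' as constant along rays) making it precise, while the restriction $p\ge 1$ is harmless but unnecessary.
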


This invariance prevents the model from dynamically adjusting its focus: it cannot ``concentrate'' attention even when the query is highly confident.

SLA restores this magnitude sensitivity through the head-level softmax gates \(\mathcal{G}^Q_h = \softmax((Q W_{GQ})_h)\) and \(\mathcal{G}^K_h = \softmax((K W_{GK})_h)\).

\begin{theorem}
    \label{thm:magnitude_recovery}
    In SLA, the head gating distribution \(\mathcal{G}^Q\) is sensitive to the query magnitude. Specifically, if we scale the projection \(s = x W_{GQ}\) by \(\lambda\), the entropy of the gating distribution \(\mathcal{H}(\mathcal{G}^Q)\) decreases as \(\lambda\) increases.
    \begin{equation}
      \lim_{\lambda \to \infty} \mathcal{G}^Q(\lambda s) = \text{one\_hot}(\operatorname*{argmax}_h s_h).
    \end{equation}
\end{theorem}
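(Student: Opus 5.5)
Write $p_h(\lambda) = \mathcal{G}^Q_h(\lambda s) = e^{\lambda s_h}/Z(\lambda)$ with $Z(\lambda)=\sum_{j=1}^H e^{\lambda s_j}$. The theorem makes two claims: monotone decrease of entropy in $\lambda$, and the one-hot limit. I will handle them separately, using the Gibbs/log-partition structure of the softmax. A degenerate case must be set aside first. If all $s_h$ are equal, the gate is uniform for every $\lambda$, so the entropy is constant and the argmax is not unique. I will therefore assume that $s$ is not constant, so the strict claim is meaningful. I will also assume, as the displayed limit implicitly requires, that $\operatorname*{argmax}_h s_h$ is unique. In the case of ties, the limit is the uniform distribution over the maximizers, and I will remark on this.

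\textbf{Monotonicity.} Let $A(\lambda)=\log Z(\lambda)$. The standard identities give $A'(\lambda)=\mathbb{E}_{p(\lambda)}[s]$ and $A''(\lambda)=\mathrm{Var}_{p(\lambda)}(s)$. Since $\log p_h = \lambda s_h - A(\lambda)$, the entropy is $\mathcal{H}(\lambda) = -\sum_h p_h\log p_h = A(\lambda)-\lambda A'(\lambda)$. Differentiating yields $\mathcal{H}'(\lambda) = A' - A' - \lambda A'' = -\lambda\,\mathrm{Var}_{p(\lambda)}(s)$. Every $p_h(\lambda)>0$, and $s$ is non-constant, so the variance is strictly positive. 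Hence $\mathcal{H}$ is strictly decreasing on $\lambda>0$, which is the regime of positive scaling relevant to magnitude. For $\lambda<0$ it increases, with its maximum $\log H$ at $\lambda=0$. This computation is the one step needing care. It is not hard, but the derivative identity $\mathcal{H}' = -\lambda\mathrm{Var}(s)$ is the crux, and I will verify it term by term.

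\textbf{Limit.} Let $h^\star$ be the unique maximizer and set $\delta=s_{h^\star}-\max_{h\neq h^\star}s_h>0$. Dividing numerator and denominator by $e^{\lambda s_{h^\star}}$ gives
\begin{equation*}
p_{h^\star}(\lambda)=\Bigl(1+\sum_{h\neq h^\star}e^{-\lambda(s_{h^\star}-s_h)}\Bigr)^{-1}\ge \bigl(1+(H-1)e^{-\lambda\delta}\bigr)^{-1}\to 1 .
\end{equation*}
For $h\neq h^\star$ we have $p_h\le e^{-\lambda\delta}\to 0$. This gives convergence to $\text{one\_hot}(h^\star)$ at an exponential rate. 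Consistently, $\mathcal{H}(\lambda)\to 0$.

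Finally, I will contrast this with Proposition~\ref{prop:magnitude}. There, homogeneity of $\phi$ makes the normalized weights invariant under scaling, whereas the exponential in the head softmax is not homogeneous, which is precisely what breaks the invariance.
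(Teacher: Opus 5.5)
Your proof is correct. It also does more than the paper, which gives no separate proof of Theorem~\ref{thm:magnitude_recovery}. The only supporting reasoning in the paper is the first line of the proof of Theorem~\ref{thm:wta}. That line simply asserts that $\mathrm{softmax}(\lambda s)$ converges to a one-hot vector ``assuming unique maximums.'' The claim that the entropy decreases monotonically is never justified there.

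Your route fills both gaps. The log-partition identity $\mathcal{H}'(\lambda) = -\lambda\,\mathrm{Var}_{p(\lambda)}(s)$ proves the entropy statement directly. The bound $p_h \le e^{-\lambda\delta}$ for $h \neq h^\star$ turns the asserted limit into a convergence estimate with an exponential rate.

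You also state the hypotheses that the paper's wording leaves implicit:
\begin{itemize}
\item The decrease holds only for $\lambda > 0$; the entropy increases on $\lambda < 0$ and peaks at $\log H$ at $\lambda = 0$.
\item The decrease is strict only when $s$ is non-constant; otherwise the entropy is constant.
\item The one-hot limit requires a unique maximizer. With $m$ tied maximizers, the limit is uniform over them and the entropy tends to $\log m$ rather than $0$.
\end{itemize}

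The paper gains only brevity, by appealing to the standard zero-temperature limit of softmax. Your argument supplies the actual proof together with the precise conditions under which the theorem holds.
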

A similar property holds for \(\mathcal{G}^K\) with respect to key magnitude. This implies that SLA recovers the ability to perform confidence-based sharpening. When the model is confident (large query magnitude), the softmax gate saturates, selecting a single head (slot) and suppressing others. This effectively modulates the overall attention sharpness, enabling the model to switch between diffuse (low confidence, high entropy) and focused (high confidence, low entropy) modes—a dynamic property central to softmax attention but absent in standard linear variants.

\subsection{Asymptotic Winner-Take-All Dynamics}
\label{sec:analysis:wta}

Building on the restored magnitude sensitivity, we show that SLA's dual gating mechanism asymptotically converges to a strict ``Winner-Take-All'' selection. This ensures that information is routed only when the query and key agree on the same semantic subspace.

\begin{theorem}
    \label{thm:wta}
    Let \(s^Q = Q W_{GQ}\) and \(s^K = K W_{GK}\) be the head projection scores for a query \(Q\) and key \(K\). Consider the scaled gates \(\mathcal{G}^Q(\lambda s^Q) = \softmax(\lambda s^Q)\) and \(\mathcal{G}^K(\lambda s^K) = \softmax(\lambda s^K)\).
    Define the information flow coefficient as \(C(\lambda) = \sum_{h=1}^H \mathcal{G}^Q_h(\lambda s^Q) \mathcal{G}^K_h(\lambda s^K)\).
    As \(\lambda \to \infty\):
    \begin{equation}
      \lim_{\lambda \to \infty} C(\lambda) = \delta(\operatorname*{argmax}_h s^Q_h, \operatorname*{argmax}_h s^K_h)
    \end{equation}
    where \(\delta(\cdot, \cdot)\) is the Kronecker delta.
\end{theorem}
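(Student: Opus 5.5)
The plan is to reduce the statement to Theorem~\ref{thm:magnitude_recovery} applied separately to each gate, and then pass to the limit in a finite sum. As in that theorem, assume the maxima are unique: let \(h_Q = \operatorname*{argmax}_h s^Q_h\) and \(h_K = \operatorname*{argmax}_h s^K_h\) be well defined. If there are ties, the limiting gate is uniform over the maximizing set and the Kronecker delta would have to be replaced by an overlap quantity, so I would state this genericity assumption explicitly.

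\textbf{Step 1 (gate limits).} For each head \(h\) I would write
\[
\mathcal{G}^Q_h(\lambda s^Q) = \Bigl(1 + \sum_{h' \neq h} \exp\bigl(\lambda (s^Q_{h'} - s^Q_h)\bigr)\Bigr)^{-1}.
\]
If \(h = h_Q\), every exponent is strictly negative, so each term tends to \(0\) and \(\mathcal{G}^Q_{h_Q} \to 1\). If \(h \neq h_Q\), the term with \(h' = h_Q\) has a strictly positive exponent and diverges, so \(\mathcal{G}^Q_h \to 0\). This recovers the one-hot limit of Theorem~\ref{thm:magnitude_recovery}. The same computation applies to \(\mathcal{G}^K(\lambda s^K)\) with \(h_K\).

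\textbf{Step 2 (product and sum).} Since \(H\) is finite and each factor converges, the limit of the product is the product of the limits, and the limit of the sum is the sum of the limits. Hence
\[
\lim_{\lambda\to\infty} C(\lambda) = \sum_{h=1}^H \mathbf{1}[h = h_Q]\,\mathbf{1}[h = h_K] = \delta(h_Q, h_K).
\]

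\textbf{Optional rate.} I would also note the convergence rate. With gaps \(\Delta_Q = s^Q_{h_Q} - \max_{h\neq h_Q} s^Q_h > 0\) and \(\Delta_K\) defined analogously, the bound \(1 - \mathcal{G}^Q_{h_Q} \le (H-1)e^{-\lambda\Delta_Q}\) holds. This gives \(|C(\lambda) - \delta(h_Q, h_K)| = O\bigl(H e^{-\lambda \min(\Delta_Q, \Delta_K)}\bigr)\), so the winner-take-all behaviour sets in exponentially fast.

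The only real obstacle is the tie case. Without unique argmaxes the claimed formula is false as stated; for example, with \(s^Q\) constant, \(C(\lambda) \to 1/H\) when \(s^K\) has a unique maximum. I would therefore state uniqueness as a hypothesis rather than try to prove the general case. Everything else is elementary limit arithmetic.
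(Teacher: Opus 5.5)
Your proposal is correct and follows the same route as the paper's proof. Both assume unique maxima, show each gate tends to the indicator of its argmax, and pass the limit through the finite sum of products to obtain $\delta(h_Q,h_K)$; the paper states uniqueness only parenthetically, so your explicit hypothesis, the closed-form gate computation, and the exponential rate are refinements of the same argument rather than a different one.
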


This result mathematically confirms that SLA acts as a precise switch: it requires a ``consensus'' between the reading head (Query) and the writing head (Key) to enable information flow, effectively filtering out noise from mismatched subspaces. This mechanism serves as a powerful proxy for the selective attention of full softmax, achieving similar competitive dynamics via head-level routing.

\section{Experiments}
\label{sec:experiments}

\subsection{Instantiations on Linear Baselines}
\label{sec:method:instantiations}

We assess the versatility of SLA by instantiating it on top of three state-of-the-art linear attention architectures: RetNet~\citep{sun2023retentive}, Gated Linear Attention (GLA)~\citep{yang2023gated}, and Gated DeltaNet (GDN)~\citep{yang2024deltanet}.

\paragraph{Softmax-RetNet.}
RetNet employs a decay factor \(\gamma\) in its recurrent update to enforce locality. In Softmax-RetNet, we retain this decay mechanism within each head while modulating the head outputs via our proposed softmax gate:
\begin{align}
    S_{h, t} &= \gamma_h S_{h, t-1} + (\mathcal{G}^K_{h,t} \cdot \phi(k_{h,t}))^\top v_{h,t}, \\
    y_{h, t} &= (\mathcal{G}^Q_{h,t} \cdot \phi(q_{h,t})) S_{h, t}.
\end{align}
Here, \(\gamma_h\) encodes positional distance, whereas our gates \(\mathcal{G}^Q\) and \(\mathcal{G}^K\) capture semantic relevance through competition.

\paragraph{Softmax-GLA.}
Gated Linear Attention (GLA) introduces a data-dependent forget gate \(\alpha_t\) to the recurrence. The instantiation of Softmax-GLA is straightforward:
\begin{align}
    S_{h, t} &= \alpha_{h,t} \odot S_{h, t-1} + (\mathcal{G}^K_{h,t} \cdot \phi(k_{h,t}))^\top v_{h,t}, \\
    y_{h, t} &= (\mathcal{G}^Q_{h,t} \cdot \phi(q_{h,t})) S_{h, t}.
\end{align}
In this formulation, the original gate \(\alpha_{h,t}\) governs memory \emph{maintenance} (decay), while our softmax gates \(\mathcal{G}^Q\) and \(\mathcal{G}^K\) govern \emph{information selection} (input/output routing).

\paragraph{Softmax-GatedDeltaNet.}
Gated DeltaNet~\citep{yang2025gateddeltanetworksimproving} uses a delta rule for memory updates, conceptually replacing addition with rewriting. In Softmax-GatedDeltaNet, we apply the head-softmax over the output of the delta-update mechanism:
\begin{align}
    v'_{\text{new}} &= \beta_t \odot (\mathcal{G}^K_{h,t} \cdot v_t - \phi(k_t) S_{t-1}), \\
    S_{h, t} &= S_{h, t-1} + (\phi(k_{h,t}))^\top v'_{h, \text{new}}, \\
    y_{h, t} &= (\mathcal{G}^Q_{h,t} \cdot \phi(q_{h,t})) S_{h, t}.
\end{align}
By introducing competition to Gated DeltaNet, we allow the model to selectively attend to the most effective ``rewritten'' memory states, combining precise memory control with global selection.

\subsection{Experimental Setup}
\label{subsec:setup}

\paragraph{Baselines.}
In addition to the aforementioned linear baselines (RetNet, GLA, and Gated DeltaNet), we include Transformer++~\citep{Transformer++} as a strong full-attention baseline to benchmark the performance gap between linear and quadratic attention mechanisms.

\paragraph{Model Configuration.}
For a fair comparison, all models are trained from scratch at the 340M-parameter scale. Following prior work~\citep{yang2023gated}, we use a unified backbone configuration with 24 layers, a hidden size of 1024, and 4 attention heads. Unless otherwise specified, all other architectural and optimization settings are kept identical across models.

We implement the token-mixing blocks consistent with their original formulations. Specifically, for GLA and RetNet, we follow~\citep{yang2023gated} and apply no additional non-linear activation functions \(\phi(\cdot)\) to \(Q\) and \(K\) (i.e., identity mapping). For GDN, we adopt the short-convolution module and use SiLU~\citep{elfwing2017sigmoidweightedlinearunitsneural} as the activation function, in accordance with~\citep{yang2025gateddeltanetworksimproving}.

\paragraph{Training Details.}
We sample 15B tokens from SlimPajama~\citep{cerebras2023slimpajama} and tokenize it using the Mistral tokenizer for training, following the recipe of~\citep{yang2023gated}. We use AdamW with a peak learning rate of \(1\times10^{-3}\), weight decay of 0.1, and gradient clipping with a maximum norm of 1.0. The learning rate follows a cosine decay schedule with a 0.5B-token warmup. The maximum sequence length is set to 4096. All models are trained on 8 H20 GPUs. Implementations are based on the open-source Triton-based FLA library~\citep{yang2024fla}.

\subsection{Sparse Retrieval Capabilities}
\label{subsec:sparse_retrieval}

A central motivation of SLA is to restore the ``winner-take-all'' competition characteristic of softmax attention, which is notably absent in standard linear variants. This lack of competition impedes the model's ability to sharply focus on relevant information, particularly in recall-intensive scenarios requiring robust noise suppression. To rigorously evaluate whether SLA successfully reinstates this capability, we assess its performance across the following two dimensions.

\iffalse
three distinct retrieval-oriented dimensions:
\begin{enumerate}
    \item \textbf{Real-World Retrieval:} Evaluates the ability to extract specific knowledge from documents in practical QA settings.    
    \item \textbf{Needle-In-A-Haystack (NIAH):} Probes the limits of precise retrieval by requiring the identification of a single piece of information embedded within a large corpus of irrelevant text.
    \item \textbf{Long-Context Understanding:} Tests the model's capacity to maintain retrieval accuracy and comprehension over extended sequences (up to 4K tokens) where distractors are abundant.
\end{enumerate}
\fi

\begin{table}[t]
\vspace{0mm}
\centering
%\scriptsize
\setlength{\tabcolsep}{2pt}
\caption{Accuracy on real-world retrieval tasks.}
\begin{tabular}{l|cccc|c}
\hline
\textbf{Model} & \textbf{SWDE} & \textbf{SQuAD} & \textbf{FDA} & \textbf{Avg.} $\uparrow$ & \textbf{$\Delta$Avg.} \\
\hline
Transformer++     & 52.21 & 30.90 & 65.43 & 49.51 & -- \\
\hline
RetNet& 19.71& 27.28& \textbf{12.89}& 19.96& -- \\
\rowcolor{slablue} \textit{Softmax RetNet}& \textbf{30.51}& \textbf{32.74}& 9.98& \textbf{24.41}& +4.45\\ \hline
GLA               & 22.41 & 25.84 &  9.26 & 19.17 & -- \\
\rowcolor{slablue} \textit{Softmax GLA}& \textbf{33.48} & \textbf{31.27} & \textbf{15.88} & \textbf{26.88} & +7.71 \\ \hline
GDN    & 41.40 & 34.05 & 29.13 & 34.86 & -- \\
\rowcolor{slablue} \textit{Softmax GDN}& \textbf{41.80} & \textbf{34.76} & \textbf{28.96} & \textbf{35.17} & +0.31 \\
\hline
\end{tabular}
\addtolength{\tabcolsep}{2.5pt}
\label{tab:recall_results_small}
\vspace{-4mm}
\end{table}

\begin{table*}[h]
\centering
\caption{Zero-shot performance comparison on S-NIAH benchmark}
\resizebox{1\textwidth}{!}{%
\begin{tabular}{ll|cccc|cccc|cccc}
\toprule
& & \multicolumn{4}{c|}{S-NIAH-1} & \multicolumn{4}{c|}{S-NIAH-2} & \multicolumn{4}{c}{S-NIAH-3} \\
& & \multicolumn{4}{c|}{(pass-key retrieval)} & \multicolumn{4}{c|}{(number in haystack)} & \multicolumn{4}{c}{(uuid in haystack)} \\
\cmidrule{3-14}
Model & & 1K & 2K & 4K & 8K & 1K & 2K & 4K & 8K & 1K & 2K & 4K & 8K \\
\midrule
Transformer++   & & 100.0 & 100.0 & 100.0 & 0.0  & 100.0 & 100.0 & 100.0 & 0.0  & 95.2 & 91.0 & 68.0 & 0.0 \\
\hline
RetNet      & & 71.6 & 27.8 & 14.8 & 6.6& 88.6 & 32.0  & 12.0  & 7.2 & 3.6 & 2.2 & 0.8 & \textbf{1.8} \\
\rowcolor{slablue} \textit{Softmax RetNet} & & 99.2 & 71.2 & 18.0  & 0.0& 95.4 & 69.2  & 13.8  & 3.0 & \textbf{21.2} & \textbf{6.4} & \textbf{1.6} & 0.0 \\ \hline
GLA           & & 95.2  & 44.8  & 19.8  & 8.4  & 97.0  & 81.4  & 27.2  & 3.0  & \textbf{30.4} & \textbf{19.2} & 0.2  & 0.6 \\
\rowcolor{slablue} \textit{Softmax GLA}   & & 98.6  & 85.2  & 40.4  & 13.6 & 99.4  & 98.4  & 62.6  & 17.4 & 28.6 & 15.4 & \textbf{5.6 } &\textbf{ 1.6} \\ \hline
GDN           & & 100.0 & 100.0 & 100.0 & 100.0& 100.0 & 100.0 & 67.8  & 14.4 & 3.6  & 16.4 & 0.0  & 5.9 \\
\rowcolor{slablue} \textit{Softmax GDN}  & & 100.0 & 100.0 & 100.0 & 100.0 & 100.0 & 100.0 & 57.0  & 37.4 & \textbf{89.4} & \textbf{89.6} & \textbf{46.6} & \textbf{21.4} \\
\bottomrule
\end{tabular}
}
\label{tab:niah-results}
\end{table*}

\paragraph{Real-World Retrieval.}
We first examine whether head-wise softmax competition translates into stronger evidence retrieval in realistic scenarios. Following the protocol of~\citep{arora2025simplelinearattentionlanguage}, we conduct zero-shot in-context learning on three benchmarks: FDA~\citep{wu2021medical}, SWDE~\citep{lockard-etal-2019-openceres}, and SQuAD~\citep{rajpurkar2018know}. These tasks require the model to identify and utilize specific spans from the context to answer queries. The inputs are truncated to 4K tokens.

As presented in Table~\ref{tab:recall_results_small}, the full-attention Transformer++ achieves the strongest overall performance, serving as an upper bound. Among linear baselines, incorporating SLA yields significant gains. Notably, Softmax-GLA demonstrates a substantial improvement, boosting average accuracy from 19.17\% to 26.88\% (+7.71\%). RetNet also sees a marked increase (+4.45\%), while GDN shows consistent performance with a slight uptick. These results confirm that reintroducing competition via SLA effectively sharpens the model's focus, enabling more accurate retrieval of relevant evidence in standard QA tasks.

\paragraph{Needle-In-A-Haystack (NIAH).}
To further stress-test precise retrieval capabilities, we employ the Needle-In-A-Haystack benchmark from RULER~\citep{hsieh2024rulerwhatsrealcontext}. This task requires the model to recover a specific value (the ``needle'') associated with a query key, buried within a long sequence of distractor text (the ``haystack''). It challenges both the model's long-range memory retention and its ability to robustly filter out interference.

Results in Table~\ref{tab:niah-results} reveal that while the quadratic Transformer++ fails at 8K length, linear baselines degrade significantly as context grows. In contrast, adding SLA consistently enhances robustness. This improvement is particularly pronounced in the most challenging setting, S-NIAH-3 (UUID retrieval), where standard linear models struggle. Notably, Softmax-GDN achieves a remarkable performance boost, significantly outperforming its baseline. This demonstrates that the cross-head softmax competition empowers the model to more reliably distinguish the target signal from background noise, effectively mitigating the ``loss of focus'' issue common in linear attention.

\subsection{Basic Language Modeling Capabilities}

\begin{table*}[t!]
\vspace{0mm}
\centering
%\scriptsize
%\addtolength{\tabcolsep}{-2.5pt}
\caption{Performance comparison on language modeling and zero-shot common-sense reasoning evaluated by lm-evaluation-harness~\citep{eval-harness} }
\begin{tabular}{l|cc||cccccc|cc}
\toprule
\textbf{Model}  & \textbf{Wiki.}  &  \textbf{LMB.} &  \textbf{LMB.} & \textbf{PIQA} & \textbf{Hella.} & \textbf{Wino.} & \textbf{ARC-e} & \textbf{ARC-c} & \textbf{Avg.} & \textbf{$\Delta$Avg.} \\
 & ppl $\downarrow$  &  ppl $\downarrow$  &  acc $\uparrow$  & acc $\uparrow$ & acc $\uparrow$ & acc $\uparrow$ & acc $\uparrow$ & acc $\uparrow$ & $\uparrow$ &  $\uparrow$ \\
\hline
Transformer++    & 24.59 & 31.26 & 34.39 & 65.07 & 31.89 & 52.57 & 46.55 & 19.62 & 41.68 & -- \\
\cline{1-11}
RetNet      & 29.58 & 42.88 & 30.93 & 63.93 & 31.11 & \textbf{51.62} & 45.62 & \textbf{19.71} & 40.49 & -- \\
\rowcolor{slablue} \textit{Softmax RetNet}  & \textbf{27.79 }& \textbf{40.76} & \textbf{31.83} & \textbf{65.07 }& \textbf{31.50} &50.99 & \textbf{46.21} & 19.28 & \textbf{40.81} & +0.32 \\ \hline
GLA            & 28.93 & 43.63 & 30.37 & 63.60 & 30.41 & \textbf{52.72 }& 40.87 & 18.43 & 39.40 & -- \\
\rowcolor{slablue} \textit{Softmax GLA}    & \textbf{26.32} & \textbf{39.67} & \textbf{32.58} & \textbf{64.15} & \textbf{31.82} & 50.20 & \textbf{45.66} &\textbf{ 20.14 }& \textbf{40.76} & +1.36 \\ \hline
GDN        & 24.22 & 32.94 & 33.34 & \textbf{65.29} & \textbf{32.37} & \textbf{51.14} & 47.77 & 18.52 & 41.41 & -- \\
\rowcolor{slablue} \textit{Softmax GDN}       & \textbf{23.99 }& \textbf{31.44} & \textbf{34.58} & 64.64 & 31.40 & 50.99 & \textbf{47.83 }& \textbf{20.82} & \textbf{41.71} & +0.30 \\
\bottomrule
\end{tabular}
%\addtolength{\tabcolsep}{2.5pt}
\label{tab:commonsense_results}
\end{table*}

\paragraph{Language Modeling.}
We first assess whether SLA enhances the fundamental next-token prediction capability of efficient attention backbones by restoring the missing competition mechanism. Following prior work~\citep{yang2023gated}, we report perplexity on WikiText~\citep{merity2016pointer} and LAMBADA~\citep{paperno2016lambadadatasetwordprediction}.

As shown in Table~\ref{tab:commonsense_results}, incorporating SLA consistently reduces perplexity across all evaluated backbones, indicating that the gain is architecture-agnostic rather than specific to a particular recurrence or gating design. In particular, GLA benefits the most, reducing WikiText perplexity from 28.93 to 26.32 and LAMBADA perplexity from 43.63 to 39.67. RetNet and GDN exhibit similar trends with consistent improvements. This improvement aligns with our motivation: feature-wise decomposition in linear attention removes token-wise softmax normalization, leading to a lack of global competition and potential context collapse. By lifting softmax to the head dimension, SLA forces attention heads (acting as coarse semantic slots) to compete via \(\mathcal{G}^Q\) and \(\mathcal{G}^K\), enabling a ``winner-take-all'' style selection of relevant subspaces while suppressing noise. Consequently, the model allocates its limited recurrent state capacity more selectively, improving likelihood and lowering perplexity.

\paragraph{Zero-shot Commonsense Reasoning.}
We further test whether the competition restored by SLA transfers beyond likelihood to short-context zero-shot commonsense reasoning, where the model must discriminate between plausible options using limited evidence. We evaluate accuracy on a suite of standard benchmarks, including LAMBADA~\citep{paperno2016lambadadatasetwordprediction}, PIQA~\citep{Bisk2020}, HellaSwag~\citep{zellers2019hellaswag}, WinoGrande~\citep{sakaguchi2019winogrande}, ARC-Easy, and ARC-Challenge~\citep{Clark2018ThinkYH}.

Table~\ref{tab:commonsense_results} shows that SLA improves the average accuracy for multiple backbones, with the strongest gains observed on GLA. RetNet and GDN also see consistent, albeit smaller, improvements. These results validate SLA's design: head-wise softmax competition encourages sharper, more discriminative evidence aggregation, which is crucial for multiple-choice tasks where distinguishing the correct option from near-misses is key. %While gains vary across tasks due to differing reasoning requirements, the consistent improvement in average accuracy suggests that reinstating global competition at an inter-head granularity enhances both language modeling and downstream reasoning.

\begin{figure}[t]
  \centering
  \begin{minipage}[t]{0.48\linewidth}
    \centering
    \includegraphics[width=\linewidth]{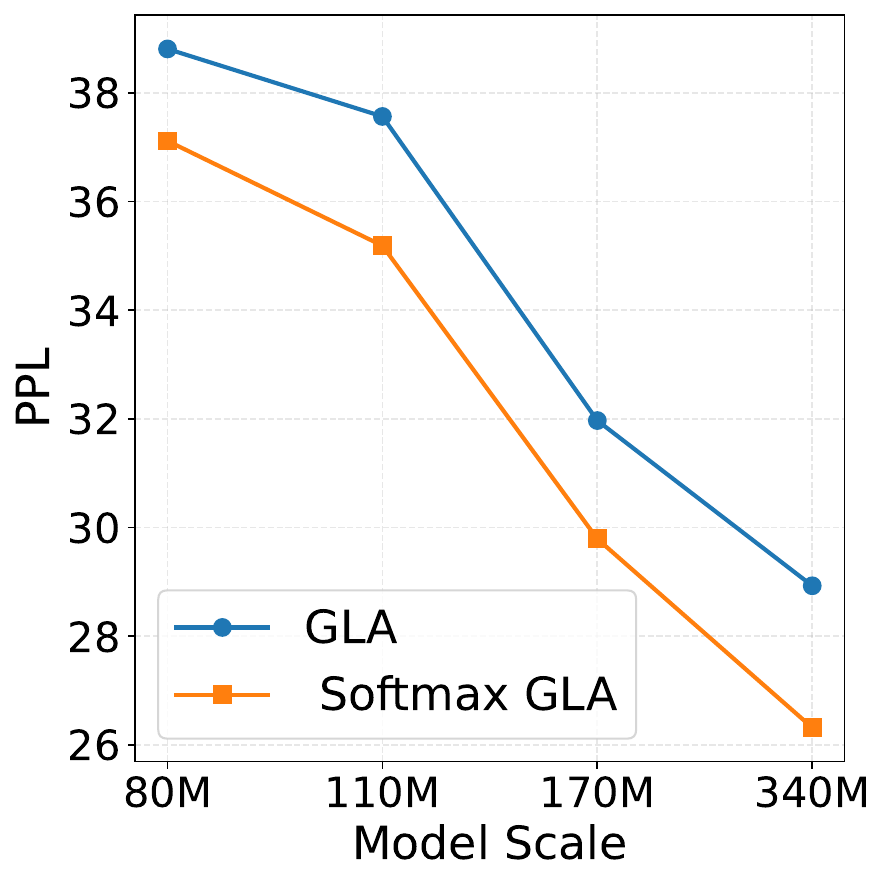}
    \caption{Scaling curves on WikiText perplexity for GLA and Softmax GLA.}
    \label{fig:scale}
  \end{minipage}\hfill
  \begin{minipage}[t]{0.48\linewidth}
    \centering
    \includegraphics[width=\linewidth]{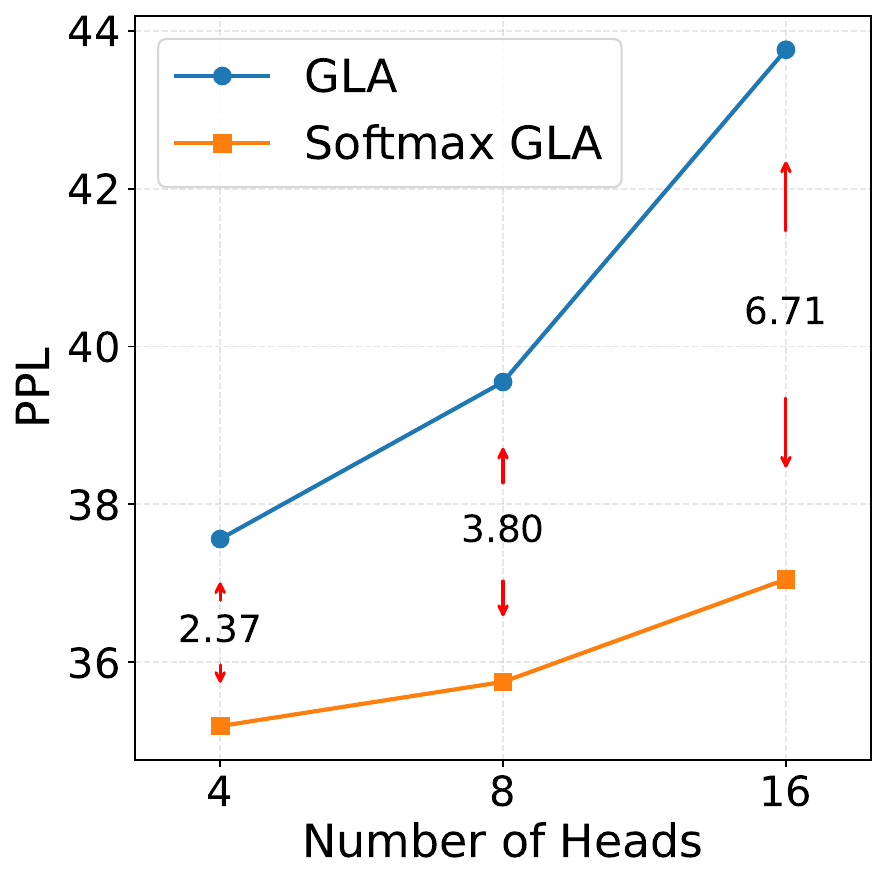}
    \caption{WikiText perplexity under different numbers of attention heads.}
    \label{fig:head}
  \end{minipage}
\end{figure}

\begin{figure*}[t]
  \centering
  \begin{subfigure}[t]{0.28\textwidth}
    \centering
    \includegraphics[width=\linewidth]{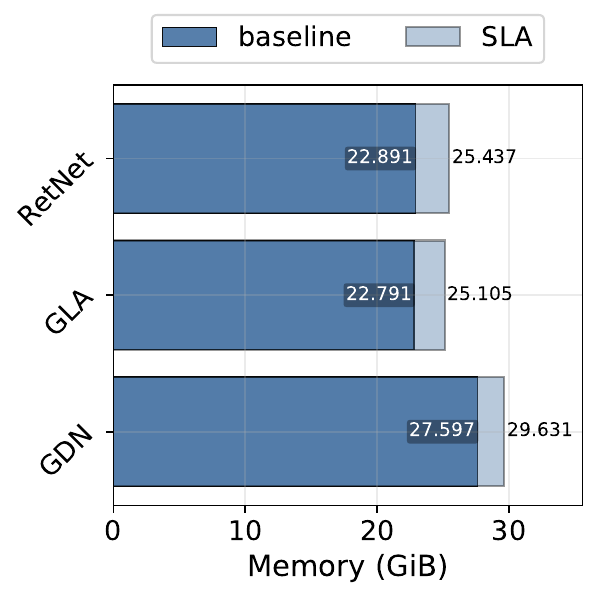}
    \caption{Memory footprint}
    \label{fig:mem}
  \end{subfigure}\hfill
  \begin{subfigure}[t]{0.38\textwidth}
    \centering
    \includegraphics[width=\linewidth]{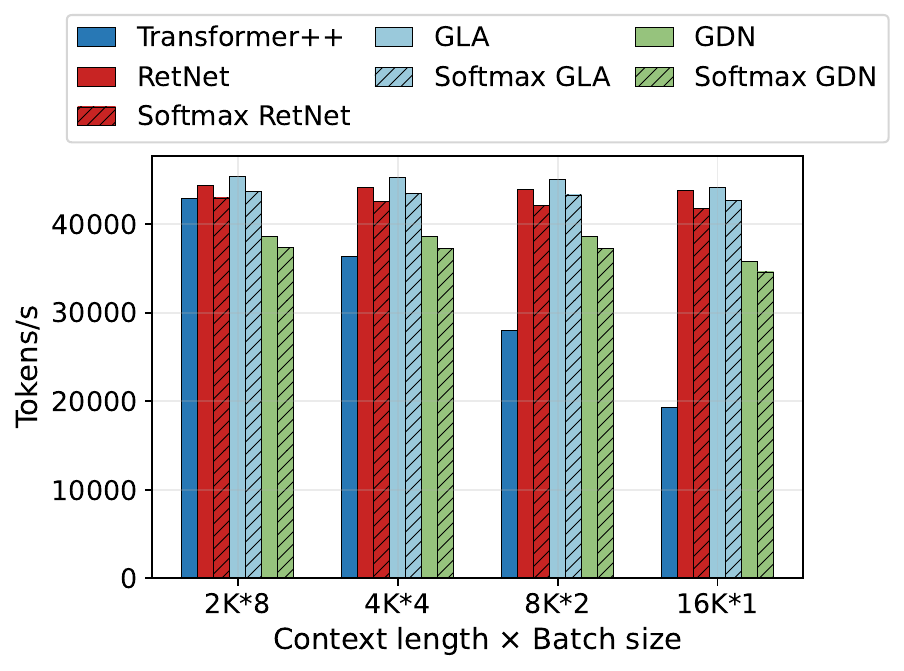}
    \caption{Training throughput}
    \label{fig:train}
  \end{subfigure}\hfill
  \begin{subfigure}[t]{0.33\textwidth}
    \centering
    \includegraphics[width=\linewidth]{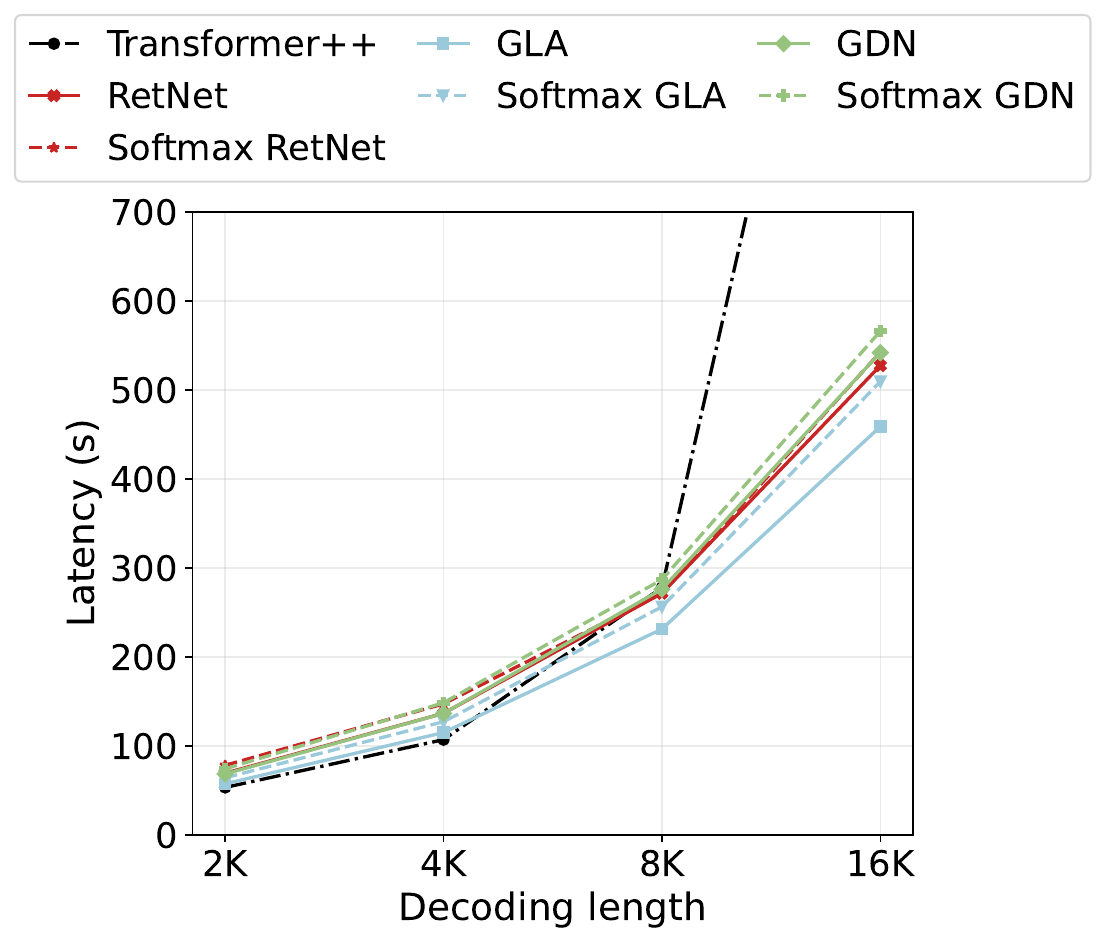}
    \caption{Inference latency}
    \label{fig:lat}
  \end{subfigure}

  \caption{Overall comparison on training throughput, memory footprint and inference latency.}
  \label{fig:overall}
\end{figure*}

\subsection{Scaling Analysis}
\label{subsec:scale}

To investigate whether the benefits of Softmax Linear Attention persist across different model scales, we conducted a scaling analysis on the GLA backbone. We trained models with parameters ranging from 80M to 340M on 15B tokens.

As illustrated in Figure~\ref{fig:scale}, both standard GLA and Softmax-GLA exhibit predictable improvements in perplexity as model size increases, adhering to standard scaling laws. Crucially, Softmax-GLA consistently outperforms the standard GLA baseline across all evaluated scales. The performance gap remains significant even as the model size grows, indicating that the advantages of the proposed inter-head competition mechanism are not limited to small-scale models but are intrinsic to the architecture. A striking observation is the parameter efficiency gain; Softmax GLA at 170M achieves a perplexity (\(\approx\) 29.8) that approaches the performance of the standard GLA at 340M (\(\approx\) 29.0). %We expect that further per-scale hyperparameter tuning could lead to an even stronger scaling trend for Softmax-GLA.

\subsection{Impact of Head Granularity}
\label{subsec:head}

To validate our core hypothesis that SLA leverages the multi-head architecture as competitive semantic slots, we conducted an ablation study on the number of attention heads $H$. We trained models with $H \in \{4, 8, 16\}$ while keeping the total model parameters fixed (110M, implying a corresponding decrease in head dimension).

Figure~\ref{fig:head} reveals a striking trend: the performance gap between Softmax GLA and the baseline widens significantly as the number of heads increases. Specifically, the perplexity improvement provided by SLA grows from 2.37 at $H=4$ to 6.71 at $H=16$. This trend aligns perfectly with the design intuition behind SLA: we leverage attention heads as distinct semantic slots to store coarse-grained information and enforce competition among them. With more heads, the model obtains a richer set of semantic subspaces to represent diverse features. SLA effectively capitalizes on this by using the inter-head softmax to accurately route information to the most relevant slots and suppress noise. In contrast, standard GLA lacks this global selection mechanism; as the feature space becomes more fragmented with higher head counts, it struggles to aggregate these dispersed signals, leading to performance degradation.

\subsection{Training Efficiency}
\label{subsec:efficiency}

Figure~\ref{fig:overall} reports the efficiency of different backbones with and without SLA on a single H20 GPU, focusing on three aspects: peak training memory, training throughput, and autoregressive decoding latency.

\textbf{Peak memory footprint.} As shown in Figure~\ref{fig:mem}, incorporating SLA incurs a marginal additional peak memory cost during training for all backbones. In practice, this memory overhead remains manageable because the routing signals are low-dimensional and can be computed in reduced precision, and because the backbone activations dominate memory usage in long-context settings.

\textbf{Training throughput.} As shown in Figure~\ref{fig:train}, adding SLA leads to a slight but consistent throughput drop across various context-length and batch-size configurations. This overhead stems primarily from computing routing scores, applying the corresponding mixing/gating operations, and the additional element-wise softmax normalization. Importantly, the throughput degradation does not scale drastically with sequence length, indicating that the additional computation is lightweight compared to the dominant attention cost of the backbone.

\textbf{Inference latency.} Figure~\ref{fig:lat} compares end-to-end decoding latency under increasing decoding lengths. SLA introduces a mild latency increase that tracks the training-time overhead: the routing computation is performed per step and adds a small constant factor. Notably, the relative gap between the baseline and SLA remains stable across decoding lengths for GDN, suggesting that SLA does not introduce unfavorable length-dependent complexity. Overall, SLA improves modeling capacity while incurring only moderate efficiency overhead, enhancing its practicality for both training and long-context inference.

\section{Conclusion}
\label{sec:conclusion}

In this paper, we identify the lack of global competition as a key limitation of linear attention and propose Softmax Linear Attention to address it. By introducing a lightweight dual gating mechanism at the head level, SLA restores the ``winner-take-all'' dynamics essential for precise retrieval without sacrificing linear complexity. Theoretical analysis and experiments demonstrate that SLA consistently enhances SOTA linear baselines in language modeling and long-context tasks, offering a seamless and efficient solution to bridge the expressivity gap between linear and quadratic attention.

\section*{Impact Statement}
This paper presents work whose goal is to advance the field of machine learning. There are many potential societal consequences of our work, none of which we feel must be specifically highlighted here.

\bibliography{softmax_linear_attention}
\bibliographystyle{icml2026}

%%%%%%%%%%%%%%%%%%%%%%%%%%%%%%%%%%%%%%%%%%%%%%%%%%%%%%%%%%%%%%%%%%%%%%%%%%%%%%%
%%%%%%%%%%%%%%%%%%%%%%%%%%%%%%%%%%%%%%%%%%%%%%%%%%%%%%%%%%%%%%%%%%%%%%%%%%%%%%%
% APPENDIX
%%%%%%%%%%%%%%%%%%%%%%%%%%%%%%%%%%%%%%%%%%%%%%%%%%%%%%%%%%%%%%%%%%%%%%%%%%%%%%%
%%%%%%%%%%%%%%%%%%%%%%%%%%%%%%%%%%%%%%%%%%%%%%%%%%%%%%%%%%%%%%%%%%%%%%%%%%%%%%%
\newpage
\appendix
\onecolumn
\section{Derivation of Recurrent Implementation}
\label{sec:appendix:recurrent}

The recurrent form of Softmax Linear Attention (SLA) follows directly from the standard linear attention recurrence by absorbing the scalar gates into the feature maps. This derivation establishes the mathematical equivalence between the parallel formulation (Eq.~\ref{eq:sla_final}) and the recurrent update rules (Eq.~\ref{eq:sla_update}).

Recall the parallel form for head \(h\) at time \(t\) (derived from Eq.~\ref{eq:sla_final}):
\begin{equation}
    y_{h,t} = \mathcal{G}^Q_{h,t} \phi(q_{h,t}) \sum_{j=1}^{t} (\mathcal{G}^K_{h,j} \phi(k_{h,j}))^\top v_{h,j}.
\end{equation}
By defining the \textit{gated} feature maps as \(\tilde{q}_{h,t} = \mathcal{G}^Q_{h,t} \phi(q_{h,t})\) and \(\tilde{k}_{h,t} = \mathcal{G}^K_{h,t} \phi(k_{h,t})\), the equation simplifies to the standard linear attention form:
\begin{equation}
    y_{h,t} = \tilde{q}_{h,t} \sum_{j=1}^{t} \tilde{k}_{h,j}^\top v_{h,j}.
\end{equation}
Defining the recurrent state \(S_{h,t} = \sum_{j=1}^{t} \tilde{k}_{h,j}^\top v_{h,j}\), we immediately obtain the constant-memory update rule:
\begin{equation}
    S_{h,t} = S_{h,t-1} + \tilde{k}_{h,t}^\top v_{h,t}, \quad y_{h,t} = \tilde{q}_{h,t} S_{h,t}.
\end{equation}
This exactly matches the recurrent updates in Eq.~\ref{eq:sla_update}, confirming that the head-level gating does not disrupt the linear recurrence and maintains an inference cost of \(O(1)\) with respect to sequence length.

\section{Proof of Theorem \ref{thm:wta}}
\label{proof:wta}

\begin{proof}
    As \(\lambda \to \infty\), the softmax function converges to a one-hot vector (assuming unique maximums). Let \(h_Q = \operatorname*{argmax}_h s^Q_h\) and \(h_K = \operatorname*{argmax}_h s^K_h\). Then \(\mathcal{G}^Q_h \to \mathbb{I}[h = h_Q]\) and \(\mathcal{G}^K_h \to \mathbb{I}[h = h_K]\).
    The term \(\mathcal{G}^Q_h \mathcal{G}^K_h\) is non-zero (approaching 1) if and only if \(h = h_Q\) and \(h = h_K\), i.e., \(h_Q = h_K\).
    Thus, the sum \(\sum_h \mathcal{G}^Q_h \mathcal{G}^K_h\) converges to 1 if \(h_Q = h_K\), and 0 otherwise.
\end{proof}

\begin{table}[t]
  \centering
  \begin{minipage}[t]{0.55\linewidth}
    \vspace{0pt}
    \centering
    \caption{Model hyperparameters for Scaling Analysis}
    \label{tab:scale_parameters}
    \begin{tabular}{ccccc}
      \toprule
      Params Scale & Heads & Layers & Hidden\_size & Tokens \\
      \midrule
      80M  & 4 & 16 & 640  & 15B \\
      110M & 4 & 18 & 640  & 15B \\
      170M & 4 & 24 & 768  & 15B \\
      340M & 4 & 24 & 1024 & 15B \\
      \bottomrule
    \end{tabular}
  \end{minipage}\hfill
  \begin{minipage}[t]{0.45\linewidth}
    \vspace{0pt}
    \centering
    \includegraphics[width=\linewidth]{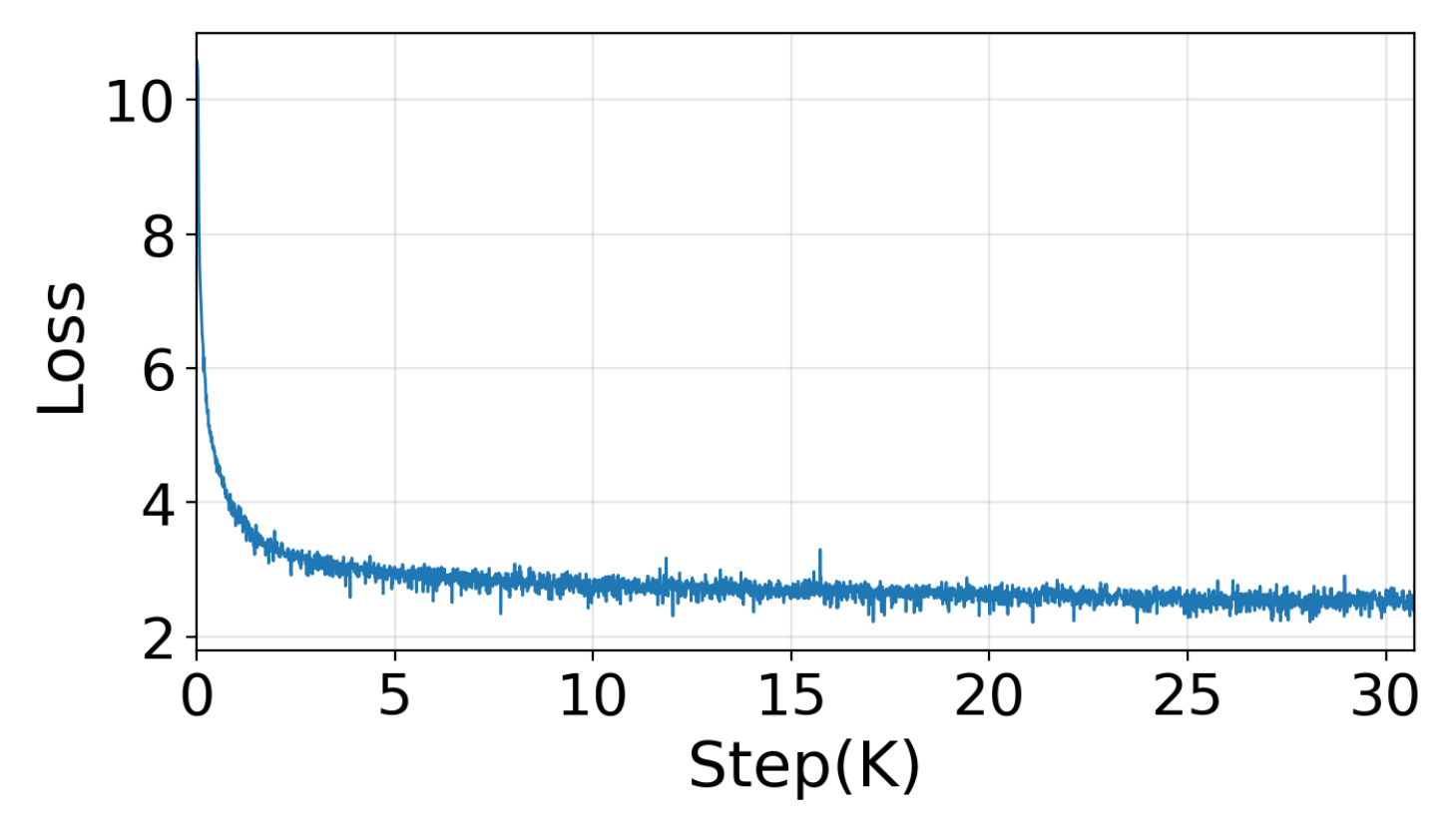}
    \captionof{figure}{Softmax RetNet training loss curves under the same setup in Section~\ref{subsec:setup}.}
    \label{fig:train_loss}
  \end{minipage}
\end{table}

\section{Model hyperparameters for Scaling Analysis}
Table~\ref{tab:scale_parameters} summarizes the hyperparameters used in Section~\ref{subsec:scale}. All models are trained on the same 15B tokens with the identical training setup in Section~\ref{subsec:setup}. We scale model size by increasing layers and hidden size while keeping the number of attention heads fixed ($H=4$) for a controlled comparison.

\section{Training Stability}
As shown in Figure~\ref{fig:train_loss}, under the same training setup in Section~\ref{subsec:setup}, introducing SLA keeps the training loss trajectory smooth, without introducing noticeable extra oscillations, indicating that SLA preserves training stability.

%%%%%%%%%%%%%%%%%%%%%%%%%%%%%%%%%%%%%%%%%%%%%%%%%%%%%%%%%%%%%%%%%%%%%%%%%%%%%%%
%%%%%%%%%%%%%%%%%%%%%%%%%%%%%%%%%%%%%%%%%%%%%%%%%%%%%%%%%%%%%%%%%%%%%%%%%%%%%%%

\end{document}